\documentclass{article} 
\usepackage[final]{neurips_2018}
\usepackage[utf8]{inputenc} 
\usepackage[T1]{fontenc}    
\usepackage{hyperref}       
\usepackage{url}            
\usepackage{booktabs}       
\usepackage{amsfonts}       
\usepackage{nicefrac}       
\usepackage{microtype}      

\usepackage{times}
\usepackage{helvet}
\usepackage{courier}
\usepackage{graphicx}

\usepackage{amssymb}
\usepackage[footnotesize]{subfigure}
\usepackage{algorithm}
\usepackage{algpseudocode}
\usepackage{amsthm}
\usepackage{amsmath}
\usepackage{color}
\usepackage{mathtools}
\usepackage{dsfont}
\usepackage{enumerate}
\usepackage{thmtools}
\usepackage{thm-restate}
\usepackage[title]{appendix}
\usepackage{wrapfig}
\usepackage{array}
\usepackage{xpatch}
\usepackage{dblfloatfix}
\makeatletter
\xpatchcmd{\algorithmic}
  {\ALG@tlm\z@}{\leftmargin\z@\ALG@tlm\z@}
  {}{}
\makeatother
\DeclarePairedDelimiter\br{(}{)}
\DeclarePairedDelimiter\brs{[}{]}
\DeclarePairedDelimiter\brc{\{}{\}}
\algnewcommand{\IIf}[1]{\State\algorithmicif\ #1\ \algorithmicthen}
\algnewcommand{\EndIIf}{\unskip\ \algorithmicend\ \algorithmicif}
\DeclarePairedDelimiter\abs{\lvert}{\rvert}
\DeclarePairedDelimiter\norm{\lVert}{\rVert}

\declaretheorem[name=Definition]{definition}
\algdef{SE}[SUBALG]{Indent}{EndIndent}{}{\algorithmicend\ }%
\algtext*{Indent}
\algtext*{EndIndent}

\title{Learn What Not to Learn: Action Elimination with Deep Reinforcement Learning}

\author{
Tom Zahavy$^{*1,2}$, Matan Haroush$^{*1}$, Nadav Merlis$^{*1}$, Daniel J. Mankowitz$^{3}$, Shie Mannor$^1$ \\
$^1$The Technion - Israel Institute of Technology, $^2$ Google research, $^3$ Deepmind\\
\and * Equal contribution
\and Corresponding to \texttt{$\{$tomzahavy,matan.h,merlis$\}$@campus.technion.ac.il}
}

%
\usepackage[dvipsnames]{xcolor}

\colorlet{LightRubineRed}{RubineRed!70!}
\colorlet{orange}{green!10!orange!90!}
\definecolor{teal}{HTML}{00F9DE}

\begin{document}

\maketitle
\begin{abstract}
Learning how to act when there are many available actions in each state is a challenging task for Reinforcement Learning (RL) agents, especially when many of the actions are redundant or irrelevant. In such cases, it is sometimes easier to learn which actions \textbf{not} to take. In this work, we propose the Action-Elimination Deep Q-Network (AE-DQN) architecture that combines a Deep RL algorithm with an Action Elimination Network (AEN) that eliminates sub-optimal actions. The AEN is trained to predict invalid actions, supervised by an external elimination signal provided by the environment. Simulations demonstrate a considerable speedup and added robustness over vanilla DQN in text-based games with over a thousand discrete actions.
\end{abstract}
\section{Introduction}
Learning control policies for sequential decision-making tasks where both the state space and the action space are vast is critical when applying Reinforcement Learning (RL) to real-world problems. This is because there is an exponential growth of computational requirements as the problem size increases, known as the curse of dimensionality \citep{bertsekas1995neuro}. Deep RL (DRL) tackles the curse of dimensionality due to large state spaces by utilizing a Deep Neural Network (DNN) to approximate the value function and/or the policy. This enables the agent to generalize across states without domain-specific knowledge \citep{tesauro1995temporal,mnih2015human}.

Despite the great success of DRL methods, deploying them in real-world applications is still limited. One of the main challenges towards that goal is dealing with large action spaces, especially when many of the actions are redundant or irrelevant (for many states). While humans can usually detect the subset of feasible actions in a given situation from the context, RL agents may attempt irrelevant actions or actions that are inferior, thus wasting computation time. Control systems for large industrial processes like power grids \citep{wen2015optimal,glavic2017reinforcement,dalal2016hierarchical} and traffic control \citep{mannion2016experimental,van2016coordinated} may have millions of possible actions that can be applied at every time step. Other domains utilize natural language to represent the actions. These action spaces are typically composed of all possible sequences of words from a fixed size dictionary resulting in considerably large action spaces. Common examples of systems that use this action space representation include conversational agents such as personal assistants \citep{dhingra2016end,li2017end,su2016continuously,lipton2016efficient,liu2017end,zhao2016towards,wu2016google}, travel planners \citep{peng2017composite}, restaurant/hotel bookers \citep{budzianowski2017sub}, chat-bots \citep{serban2017deep,li2016deep} and text-based game agents \citep{DBLP:journals/corr/NarasimhanKB15,DBLP:journals/corr/HeCHGLDO15,zelinka2018using,yuan2018counting,cote2018textworld}. 
RL is currently being applied in all of these domains, facing new challenges in function approximation and exploration due to the larger action space. 

In this work, we propose a new approach for dealing with large action spaces that is based on \textit{action elimination}; that is, restricting the available actions in each state to a subset of the most likely ones (Figure \ref{fig:diagram}). We propose a method that eliminates actions by utilizing an elimination signal; a specific form of an auxiliary reward \citep{jaderberg2016reinforcement}, which incorporates domain-specific knowledge in text games. Specifically, it provides the agent with immediate feedback regarding taken actions that are not optimal. In many domains, creating an elimination signal can be done using rule-based systems. For example, in parser-based text games, the parser gives feedback regarding irrelevant actions \textit{after} the action is played (e.g., Player: "Climb the tree." Parser: "There are no trees to climb"). Given such signal, we can train a machine learning model to predict it and then use it to generalize to unseen states. Since the elimination signal provides immediate feedback, it is faster to learn which actions to eliminate (e.g., with a contextual bandit using the elimination signal) than to learn the optimal actions using only the reward (due to long term consequences). Therefore, we can design an algorithm that enjoys better performance by exploring invalid actions less frequently.

\begin{figure}
\centering
\subfigure[]{
\includegraphics[width=0.6\textwidth]{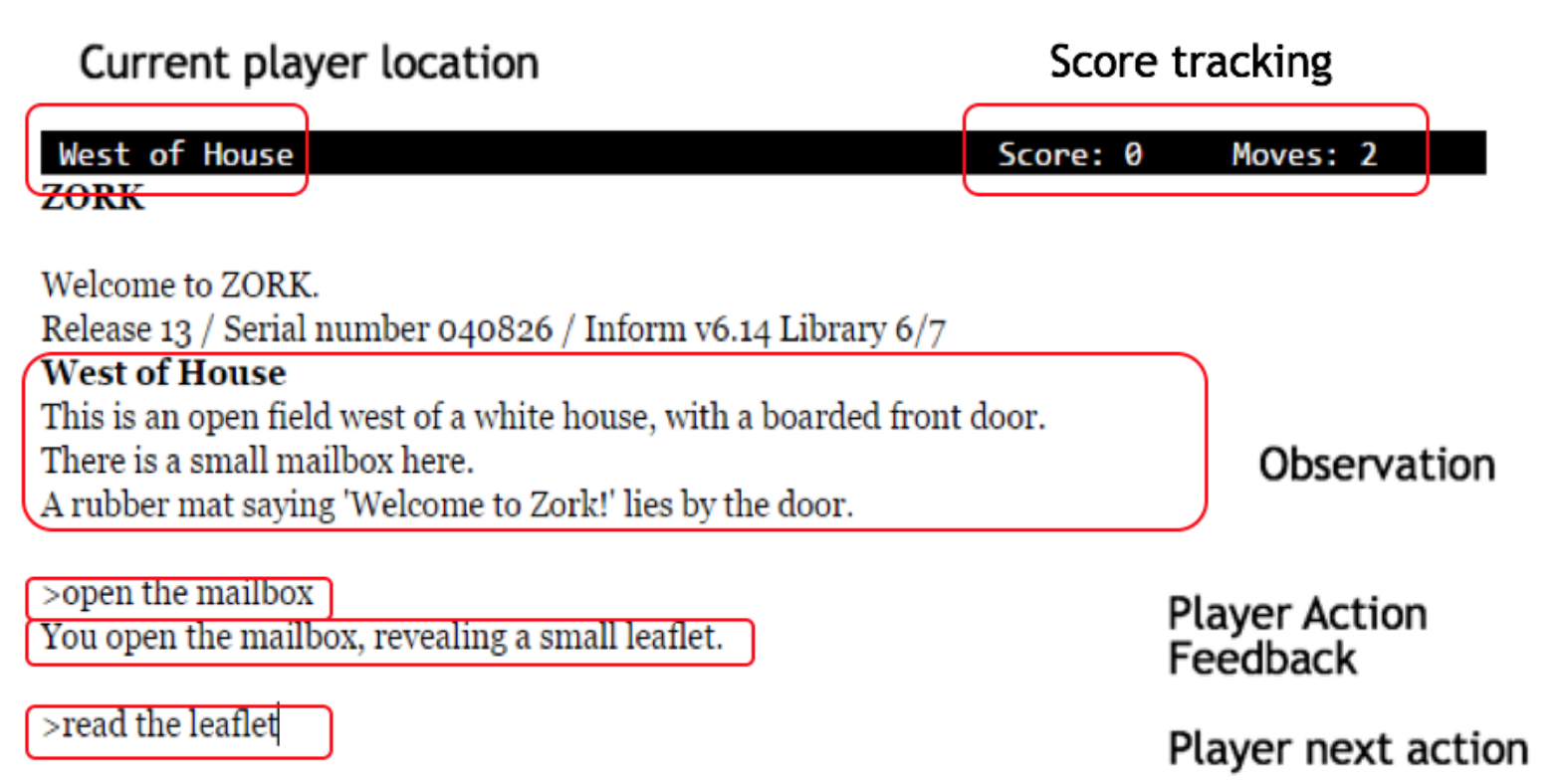}
\label{ZRK_ENV}}\hspace{0.5cm}
\subfigure[]{
\includegraphics[width=0.28\textwidth]{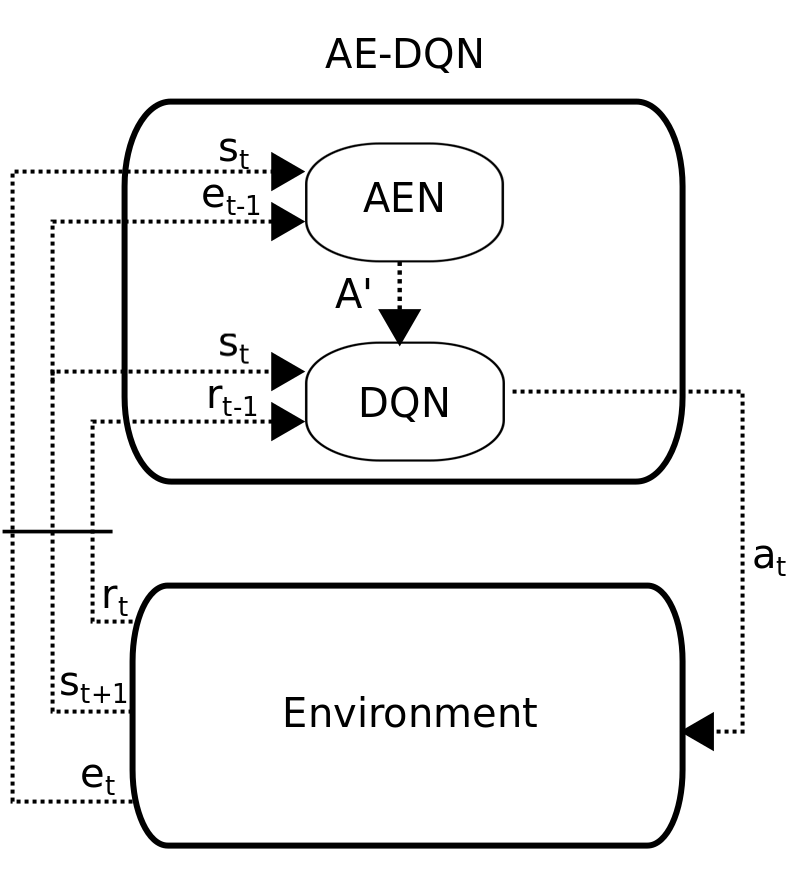}
\label{fig:diagram}}
\vspace{-0.3cm}
\caption{\textbf{($a$) Zork interface.} The state in the game (observation) and the player actions are describe in natural language. \textbf{($b$) The action elimination framework.} Upon taking action $a_t$, the agent observes a reward $r_t,$  the next state $s_{t+1} $ and an elimination signal $e_t$. The agent uses this information to learn two function approximation deep networks: a DQN and an AEN. The AEN provides an admissible actions set $A'$ to the DQN, which uses this set to decide how to act and learn.  }
\vspace{-0.5cm}
\end{figure}
 
More specifically, we propose a system that learns an approximation of the Q-function and \textit{concurrently learns to eliminate actions}. We focus on tasks where natural language characterizes both the states and the actions. In particular, the actions correspond to fixed length sentences defined over a finite dictionary (of words). In this case, the action space is of combinatorial size (in the length of the sentence and the size of the dictionary) and irrelevant actions must be eliminated to learn. We introduce a novel DRL approach with two DNNs, a DQN and an Action Elimination Network (AEN), both designed using a Convolutional Neural Network (CNN) that is suited to NLP tasks \citep{kim2014convolutional}. Using the last layer activations of the AEN, we design a linear contextual bandit model that eliminates irrelevant actions with high probability, balancing exploration/exploitation, and allowing the DQN to explore and learn Q-values only for valid actions.

We tested our method in a text-based game called "Zork". This game takes place in a virtual world in which the player interacts with the world through a text-based interface (see Figure \ref{ZRK_ENV}). The player can type in any command, corresponding to the in-game action. Since the input is text-based, this yields more than a thousand possible actions in each state (e.g., "open door", "open mailbox" etc.). We demonstrate the agent's ability to advance in the game faster than the baseline agents by eliminating irrelevant actions.

\section{Related Work}

\textbf{Text-Based Games (TBG): } 
Video games, via interactive learning environments like the Arcade Learning Environment (ALE) \citep{bellemare2013arcade}, have been fundamental to the development of DRL algorithms. Before the ubiquitousness of graphical displays, TBG like Zork were popular in the adventure gaming and role-playing communities. TBG present complex, interactive simulations which use simple language to describe the state of the environment, as well as reporting the effects of player actions (See Figure \ref{ZRK_ENV}). Players interact with the environment through text commands that respect a predefined grammar, which must be discovered in each game. 

TBG provide a testbed for research at the intersection of RL and NLP,  presenting a broad spectrum of challenges for learning algorithms \citep{cote2018textworld}\footnote{See also The CIG Competition for General Text-Based Adventure Game-Playing Agents}. In addition to language understanding, successful play generally requires long-term memory, planning, exploration \citep{yuan2018counting}, affordance extraction \citep{fulda2017can}, and common sense. Text games also highlight major open challenges for RL: the action space (text) is combinatorial and compositional, while game states are partially observable since text is often ambiguous or under-specific. Also, TBG often introduce stochastic dynamics, which is currently missing in standard benchmarks \citep{machado2017revisiting}. For example, in Zork, there is a random probability of a troll killing the player. A thief can appear (also randomly) in each room.

\textbf{Representations for TBG:} To learn control policies from high-dimensional complex data such as text, good word representations are necessary. Kim (\citeyear{kim2014convolutional}) designed a shallow word-level CNN and demonstrated state-of-the-art results on a variety of classification tasks by using word embeddings. For classification tasks with millions of labeled data, random embeddings were shown to outperform state-of-the-art techniques \citep{zahavy2016picture}. On smaller data sets, using \textit{word2vec} \citep{mikolov2013distributed} yields good performance \citep{kim2014convolutional}. 

Previous work on TBG used pre-trained embeddings directly for control \citep{kostka2017text,fulda2017can}. Other works combined pre-trained embeddings with neural networks. For example, He et al. (\citeyear{DBLP:journals/corr/HeCHGLDO15}) proposed to use Bag Of Words features as an input to a neural network, learned separate embeddings for states and actions, and then computed the Q function from autocorrelations between these embeddings. Narasimhan et al. (\citeyear{DBLP:journals/corr/NarasimhanKB15}) suggested to use a word level Long Short-Term Memory \citep{hochreiter1997long} to learn a representation end-to-end, and Zelinka et al. (\citeyear{zelinka2018using}), combined these approaches. 

\textbf{DRL with linear function approximation:} DRL methods such as the DQN have achieved state-of-the-art results in a variety of challenging, high-dimensional domains. This success is mainly attributed to the power of deep neural networks to learn rich domain representations for approximating the value function or policy \citep{mnih2015human,zahavy2016graying,zrihem2016visualizing}. Batch reinforcement learning methods with linear representations, on the other hand, are more stable and enjoy accurate uncertainty estimates. Yet, substantial feature engineering is necessary to achieve good results.
A natural attempt at getting the best of both worlds is to learn a (linear) control policy on top of the representation of the last layer of a DNN. This approach was shown to refine the performance of DQNs \citep{levine2017shallow} and improve exploration \citep{azizzadenesheli2018efficient}. Similarly, for contextual linear bandits, Riquelme et al. showed that a neuro-linear Thompson sampling approach outperformed deep (and linear) bandit algorithms in practice \citep{riquelmedeep}. 

\textbf{RL in Large Action Spaces:} Being able to reason in an environment with a large number of discrete actions is essential to bringing reinforcement learning to a larger class
of problems. Most of the prior work concentrated on factorizing the action space into binary subspaces \citep{pazis2011generalized,dulac2012fast,lagoudakis2003reinforcement}. Other works proposed to embed the discrete actions into a continuous space, use a continuous-action policy gradient to find optimal actions in the continuous space, and finally, choose the nearest discrete action \citep{dulac2015deep,van2009using}. He et. al. (\citeyear{DBLP:journals/corr/HeCHGLDO15}) extended DQN to unbounded (natural language) action spaces. His algorithm learns representations for the states and actions with two different DNNs and then models the Q values as an inner product between these representation vectors. While this approach can generalize to large action spaces, in practice, they only considered a small number of available actions (4) in each state. 

Learning to eliminate actions was first mentioned by \citep{even2003action} who studied elimination in multi-armed bandits and tabular MDPs. They proposed to learn confidence intervals around the value function in each state and then use it to eliminate actions that are not optimal with high probability. Lipton et al. (\citeyear{lipton2016combating}) studied a related problem where an agent wants to avoid catastrophic forgetting of dangerous states. They proposed to learn a classifier that detects hazardous states and then use it to shape the reward of a DQN agent. Fulda et al. (\citeyear{fulda2017can}) studied affordances,
the set of behaviors enabled by a situation, and presented a method for affordance extraction via inner products of pre-trained word embeddings.

\section{Action Elimination}
\label{action_elimination}
We now describe a learning algorithm for MDPs with an elimination signal. Our approach builds on the standard RL formulation \citep{sutton1998reinforcement}. At each time step $t$, the agent observes a state $s_t$ and chooses a discrete action $a_t \in \left \{1,..,|A| \right \}$. After executing the action, the agent obtains a reward $r_t(s_t,a_t)$ and observes the next state $s_{t+1}$ according to a transition kernel $P(s_{t+1}|s_t,a_t)$. The goal of the algorithm is to learn a policy $\pi(a|s)$ that maximizes the discounted cumulative return $V^\pi(s)=\mathbb{E}^\pi\brs*{\sum_{t=0}^\infty\gamma^tr(s_t,a_t)\vert s_0=s}$ where $0<\gamma<1$ is the discount factor and $V$ is the value function. The optimal value function is given by $V^*(s)=\max_\pi V^\pi(s)$ and the optimal policy by $\pi^*(s)=\arg\max_\pi V^\pi(s)$. The Q-function $Q^{\pi}(s,a)=\mathbb{E}^{\pi}\brs*{\sum_{t=0}^\infty\gamma^tr(s_t,a_t)\vert s_0=s,a_0=a}$ corresponds to the value of taking action $a$ in state $s$ and continuing according to policy $\pi$. The optimal Q-function $Q^*(s,a)=Q^{\pi^*}(s,a)$ can be found using the Q-learning algorithm \citep{watkins1992q}, and the optimal policy is given by $\pi^*(s)=\arg\max_aQ^*(s,a)$. 

After executing an action, the agent also observes a binary elimination signal $e(s,a),$ which equals $1$ if action $a$ may be eliminated in state $s$; that is, any optimal policy in state $s$ will never choose action $a$ (and $0$ otherwise). The elimination signal can help the agent determine which actions not to take, thus aiding in mitigating the problem of large discrete action spaces. 
We start with the following definitions:
\begin{definition}
Valid state-action pairs with respect to an elimination signal are state action pairs which the elimination process should not eliminate.
\end{definition}
As stated before, we assume that the set of valid state-action pairs contains all of the state-action pairs that are a part of some optimal policy, i.e., only strictly suboptimal state-actions can be invalid.
\begin{definition}
Admissible state-action pairs with respect to an elimination algorithm are state action pairs which the elimination algorithm does not eliminate.
\end{definition}

In the following section, we present the main advantages of action elimination in MDPs with large action spaces. Afterward, we show that under the framework of linear contextual bandits \citep{chu2011contextual}, probability concentration results \citep{abbasi2011improved} can be adapted to guarantee that action elimination is correct in high probability. Finally, we prove that Q-learning coupled with action elimination converges.

\subsection{Advantages in action elimination}

Action elimination allows the agent to overcome some of the main difficulties in large action spaces, namely: Function Approximation and Sample Complexity.

\textit{Function Approximation:}  It is well known that errors in the Q-function estimates may cause the learning algorithm to converge to a suboptimal policy, a phenomenon that becomes more noticeable in environments with large action spaces \citep{thrun1993issues}. Action elimination may mitigate this effect by taking the $\max$ operator only on valid actions, thus, reducing potential overestimation errors. Another advantage of action elimination is that the Q-estimates need only be accurate for valid actions.  The gain is two-fold: first, there is no need to sample invalid actions for the function approximation to converge. Second, the function approximation can learn a simpler mapping (i.e., only the Q-values of the valid state-action pairs), and therefore may converge faster and to a better solution by ignoring errors from states that are not explored by the Q-learning policy \citep{hester2017learning}. 

\textit{Sample Complexity}: The sample complexity of the MDP measures the number of steps, during learning, in which the policy is not $\epsilon$-optimal \citep{kakade2003sample}. Assume that there are $A'$ actions that should be eliminated and are $\epsilon$-optimal, i.e., their value is at least $V^*(s)-\epsilon$. According to lower bounds by \citep{lattimore2012pac}, We need at least $\epsilon^{-2}(1-\gamma)^{-3}\log 1/\delta$ samples per state-action pair to converge with probability $1-\delta$. If, for example, the eliminated action returns no reward and doesn't change the state, the action gap is $\epsilon=(1-\gamma)V^*(s)$, which translates to ${V^*(s)}^{-2}(1-\gamma)^{-5}\log1/\delta$ 'wasted' samples for learning each invalid state-action pair. For large $\gamma$, this can lead to a tremendous number of samples (e.g., for $\gamma=0.99,\enspace (1-\gamma)^{-5}=10^{10}$). Practically, elimination algorithms can eliminate these actions substantially faster, and can, therefore, speed up the learning process approximately by $A/A'$ (such that learning is effectively performed on the valid state-action pairs).

Embedding the elimination signal into the MDP is not trivial. One option is to shape the original reward by adding an elimination penalty. That is, decreasing the rewards when selecting the wrong actions. Reward shaping, however, is tricky to tune, may slow the convergence of the function approximation, and is not sample efficient (irrelevant actions are explored). Another option is to design a policy that is optimized by interleaved policy gradient updates on the two signals, maximizing the reward and minimizing the elimination signal error. The main difficulty with this approach is that both models are strongly coupled, and each model affects the observations of the other model, such that the convergence of any of the models is not trivial. 
 
Next, we present a method that decouples the elimination signal from the MDP by using contextual multi-armed bandits. The contextual bandit learns a mapping from states (represented by context vectors $x(s)$) to the elimination signal $e(s,a)$ that estimates which actions should be eliminated. We start by introducing theoretical results on linear contextual bandits, and most importantly, concentration bounds for contextual bandits that require almost no assumptions on the context distribution. We will later show that under this model we can decouple the action elimination from the learning process in the MDP, allowing us to learn using standard Q-learning while eliminating actions correctly.

\subsection{Action elimination with contextual bandits}

Let $x(s_t)\in \mathbb{R}^d$ be the feature representation of state $s_t.$ We assume (realizability) that under this representation there exists a set of parameters $\theta^*_a\in \mathbb{R}^d$ such that the elimination signal in state $s_t$ is $e_t(s_t,a) = {\theta ^*_a}^T x(s_t)+\eta_t$, where $\norm*{\theta^*_a}_{2}\le S.$ $\eta_t$ is an $R$-subgaussian random variable with zero mean that models additive noise to the elimination signal. When there is no noise in the elimination signal, then $R=0$. Otherwise, as the elimination signal is bounded in $[0,1]$, it holds that $R\le1$. We'll also relax our previous assumptions and allow the elimination signal to have values $0\le\mathbb{E}\brs*{e_t(s_t,a)}\le \ell$ for any valid action and $u\le\mathbb{E}\brs*{e_t(s_t,a)}\le 1$ for any invalid action, with $\ell<u$. Next, we denote by $X_{t,a}$ ($E_{t,a}$) the matrix (vector) whose rows (elements) are the observed state representation vectors (elimination signals) in which action $a$ was chosen, up to time $t$. For example, the $i^{th}$ row in $X_{t,a}$ is the representation vector of the $i^{th}$ state on which the action $a$ was chosen. Denote the solution to the regularized linear regression $\norm*{X_{t,a} \theta _{t,a}-E_{t,a}}_2^2+\lambda\norm*{\theta _{t,a}}_2^2$ (for some $\lambda>0$) by $\hat{\theta}_{t,a}=\bar{V}_{t,a}^{-1}X_{t,a}^T E_{t,a}$ where
$\bar{V}_{t,a}=\lambda I+ X_{t,a}^TX_{t,a}.$

Similarly to Theorem 2 in \citep{abbasi2011improved}\footnote{Our theoretical analysis builds on results from \citep{abbasi2011improved}, which can be extended to include Generalized Linear Models (GLMs). We focus on linear contextual bandits as they enjoy easier implementation and tighter confidence intervals in comparison to GLMs. We will later combine the bandit with feature approximation, which will approximately allow the realizability assumption even for linear bandits.}, for any state history and with probability of at least $1-\delta$, it holds for all $t>0$ that
$\abs*{{\hat{\theta}_{t-1,a}}^T x(s_t)-{\theta ^*_a}^Tx(s_t)}\le \sqrt{\beta_{t-1}(\delta)x(s_t)^T\bar{V}_{t-1,a}^{-1}x(s_t)},
$ where
$\sqrt{\beta_t(\delta)}=R\sqrt{2\log(\mbox{det}(\bar{V}_{t,a})^{1/2}\mbox{det}(\lambda I)^{-1/2}/\delta)}+\lambda^{1/2}S.$ If $\forall s, \norm*{x(s)}_2\le L$, then $\beta_t$ can be bounded by 
$\sqrt{\beta_t(\delta)}\le R\sqrt{d\log\left(\frac{1+tL^2/\lambda}{\delta}\right)}+\lambda^{1/2}S$. 
Next, we define $\tilde{\delta} = \delta/k$ and bound this probability for all the actions, i.e., $\forall a, t>0$ 
\begin{equation}
\centering
\label{eq:contextual}
\mbox{Pr}  \left\{ \abs*{\hat{\theta}_{t-1,a}^Tx(s_t)-{\theta ^*_a}^Tx(s_t) } 
 \le \sqrt{\beta_{t-1}(\tilde{\delta})x(s_t)^T\bar{V}_{t-1,a}^{-1}x(s_t)} \right\} \ge 1-\delta
\end{equation}
Recall that any valid action $a$ at state $s$ satisfies $\mathbb{E}\brs{e_t(s,a)}={\theta ^*_a}^Tx(s_t)\le \ell$. Thus, we can eliminate action $a$ at state $s_t$ if 
\begin{equation}
\label{eq:el_thr}
\hat{\theta}_{t-1,a}^Tx(s_t)-\sqrt{\beta_{t-1}(\tilde{\delta})x(s_t)^T\bar{V}_{t-1,a}^{-1}x(s_t)}>\ell
\end{equation}
This ensures that with probability $1-\delta$ we never eliminate any valid action. We emphasize that only the expectation of the elimination signal is linear in the context. The expectation does not have to be binary (while the signal itself is). For example, in conversational agents, if a sentence is not understood by 90$\%$ of the humans who hear it, it is still desirable to avoid saying it. We also note that we assume $\ell$ is known, but in most practical cases, choosing $\ell\approx 0.5$ should suffice. In the current formulation, knowing $u$ is not necessary, though its value will affect the overall performance.

\subsection{Concurrent Learning}
We now show how the Q-learning and contextual bandit algorithms can learn simultaneously, resulting in the convergence of both algorithms, i.e., finding an optimal policy and a minimal valid action space. The challenge here is that each learning process affects the state-action distribution of the other. We first define Action Elimination Q-learning.
\begin{definition}
Action Elimination Q-learning is a Q-learning algorithm which updates only admissible state-action pairs and chooses the best action in the next state from its admissible actions. We allow the base Q-learning algorithm to be any algorithm that converges to $Q^*$ with probability $1$ after observing each state-action infinitely often.
\end{definition}
If the elimination is done based on the concentration bounds of the linear contextual bandits, we can ensure that Action Elimination Q-learning converges, as can be seen in Proposition \ref{prop:convergenceProp} (See Appendix A for a full proof).
\begin{restatable}{prop}{convergenceProp}
\label{prop:convergenceProp}
Assume that all state action pairs $(s,a)$ are visited infinitely often unless eliminated according to 
$\hat{\theta}_{t-1,a}^Tx(s)-\sqrt{\beta_{t-1}(\tilde{\delta})x(s)^T\bar{V}_{t-1,a}^{-1}x(s)}>\ell$. 
Then, with a probability of at least $1-\delta$, action elimination Q-learning converges to the optimal Q-function for any valid state-action pairs. In addition, actions which should be eliminated are visited at most $T_{s,a}(t)\le 4\beta_t/\br{u-\ell}^2+1$ times. 
\end{restatable}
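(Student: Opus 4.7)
The plan is to condition throughout on the good event $\mathcal{E}$ on which the simultaneous confidence bound \eqref{eq:contextual} holds for every action $a$ and every $t>0$. By the union-bound construction with $\tilde\delta=\delta/k$ given just before the statement, $\Pr(\mathcal{E})\ge 1-\delta$. On $\mathcal{E}$, the elimination rule \eqref{eq:el_thr} cannot fire for a valid action, because for valid $a$ one has ${\theta^*_a}^T x(s)\le \ell$ and the lower confidence bound $\hat\theta_{t-1,a}^T x(s)-\sqrt{\beta_{t-1}(\tilde\delta)\, x(s)^T \bar V_{t-1,a}^{-1} x(s)}$ cannot exceed $\ell$. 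Hence, on $\mathcal{E}$, the admissible action set always contains every valid action, and in particular every action that is taken by some optimal policy.

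I would next prove the visitation bound, as it feeds directly into the convergence argument. Fix an invalid pair $(s,a)$, i.e., one with ${\theta^*_a}^T x(s)\ge u$. Applying \eqref{eq:contextual} in the other direction gives $\hat\theta_{t-1,a}^T x(s)\ge u-\sqrt{\beta_{t-1}(\tilde\delta)\, x(s)^T \bar V_{t-1,a}^{-1} x(s)}$, so the elimination rule \eqref{eq:el_thr} is satisfied whenever $\sqrt{\beta_{t-1}(\tilde\delta)\, x(s)^T \bar V_{t-1,a}^{-1} x(s)}<(u-\ell)/2$, equivalently
\[
x(s)^T \bar V_{t-1,a}^{-1} x(s)<\frac{(u-\ell)^2}{4\beta_{t-1}(\tilde\delta)}.
\]
If $a$ has been chosen at state $s$ exactly $n$ times before step $t$, then $\bar V_{t-1,a}\succeq \lambda I + n\, x(s)x(s)^T$, and a direct Sherman--Morrison computation yields $x(s)^T \bar V_{t-1,a}^{-1} x(s)\le \|x(s)\|_2^2/(\lambda+n\|x(s)\|_2^2)\le 1/n$. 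Thus once $n> 4\beta_t(\tilde\delta)/(u-\ell)^2$ the elimination condition triggers and no further visits to $(s,a)$ are possible; this gives $T_{s,a}(t)\le 4\beta_t(\tilde\delta)/(u-\ell)^2+1$, the ``$+1$'' absorbing the visit that drove the counter over the threshold.

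With the counting bound established, convergence on valid pairs follows by reduction to the black-box base Q-learning algorithm. Because every invalid pair is visited only finitely often on $\mathcal{E}$, there exists an almost surely finite (random) time $T_0$ after which the admissible set at every state consists only of valid actions. For $t\ge T_0$ the target $\max_{a'\in \mathrm{admissible}(s')} Q_t(s',a')$ inside the update equals $\max_{a'\in \mathrm{valid}(s')} Q_t(s',a')$, and the resulting restricted Bellman operator has $Q^*$ as its unique fixed point on valid pairs, because invalid actions are strictly suboptimal by the definition of the elimination signal and therefore cannot attain the max of $Q^*$. Combined with the hypothesis that every non-eliminated pair is visited infinitely often---so every valid pair receives infinitely many updates beyond $T_0$---the assumed convergence property of the base algorithm yields $Q_t(s,a)\to Q^*(s,a)$ for all valid $(s,a)$.

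The main obstacle is the two-way coupling: the Q-learning policy shapes which contexts the bandit sees, while the bandit's elimination decisions shape which pairs Q-learning is allowed to visit. The plan sidesteps this by treating the concentration event $\mathcal{E}$ as a hard guarantee that valid actions survive forever, and by reducing the analysis of invalid actions to a purely algebraic self-normalized bound on $\bar V_{t-1,a}$ that depends only on the number of times $(s,a)$ has been sampled. A secondary subtlety is that $\beta_t(\tilde\delta)$ itself depends on $t$; however, it grows only logarithmically, while per the visitation assumption a non-eliminated pair's visit count grows without bound, so the admissible set at each state truly stabilizes in finite time.
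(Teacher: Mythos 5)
Your overall strategy matches the paper's: condition on the uniform confidence event from \eqref{eq:contextual}, observe that on this event the rule \eqref{eq:el_thr} never eliminates a valid action, derive the visitation bound from the Sherman--Morrison inequality $x(s)^T\bar{V}_{t,a}^{-1}x(s)\le 1/T_{s,a}(t)$ combined with the sufficient elimination condition $T_{s,a}(t)\ge 4\beta_t/(u-\ell)^2$, and then reduce convergence to the black-box guarantee of the base Q-learning algorithm. The counting part of your argument is essentially identical to the paper's and is correct.

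The gap is in the convergence step. You assert that on $\mathcal{E}$ every invalid pair is visited only finitely often, so that after some finite time $T_0$ the admissible set at every state contains only valid actions. That does not follow from the bound $T_{s,a}(t)\le 4\beta_t(\tilde{\delta})/(u-\ell)^2+1$: when $R>0$, $\beta_t(\tilde{\delta})$ grows like $\log t$ and is unbounded, so the bound permits infinitely many visits (the paper itself remarks that in the noisy case invalid actions may be sampled a logarithmic, i.e.\ unbounded, number of times). Your closing justification---that a non-eliminated pair's visit count ``grows without bound'' while $\beta_t$ grows only logarithmically---does not close this hole, since being visited infinitely often does not imply the count outpaces $\log t$ (it could grow like $\log\log t$, in which case the elimination threshold never triggers and the action stays admissible forever). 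The paper's proof avoids the issue with a case split: any pair, valid or invalid, that is visited infinitely often converges to its $Q^*$ value by the base algorithm's assumed guarantee; only the pairs visited finitely often are excised, after their last visit, into an equivalent MDP, and since excised actions are strictly suboptimal the optimal values and the targets on valid pairs are unchanged. Your conclusion is therefore correct, but the finite-stabilization claim you route it through needs to be replaced by this (or a similar) argument.
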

Notice that when there is no noise in the elimination signal ($R=0$), we correctly eliminate actions with probability 1, and invalid actions will be sampled a finite number of times. Otherwise, under very mild assumptions, invalid actions will be sampled a logarithmic number of times.
\section{Method}

Using raw features like \textit{word2vec}, directly for control, results in exhaustive computations. Moreover, raw features are typically not realizable, i,.e., the assumption that $e_t(s_t,a) = {\theta ^*_a}^T x(s_t)+\eta_t$ does not hold. Thus, we propose learning a set of features $\phi(s_t)$ that are realizable, i.e., $e(s_t,a) = {\theta ^*_a}^T \phi(s_t)$, using neural networks (using the activations of the last layer as features). A practical challenge here is that the features must be fixed over time when used by the contextual bandit, while the activations change during optimization. We therefore follow a batch-updates framework \citep{levine2017shallow,riquelmedeep}, where every few steps we learn a new contextual bandit model that uses the last layer activations of the AEN as features.

\begin{algorithm}
\vspace{-0.1cm}
\caption{deep Q-learning with action elimination}
\begin{minipage}{0.48\textwidth}
\begin{algorithmic}
\State \textbf{Input:} $\epsilon,\beta,\ell,\lambda,C,L,N$
\State Initialize AEN and DQN with random weights $\omega, \theta$ respectively, and set target networks $Q^-,E^-$ with a copy of $\theta,\omega$
\State Define $\phi(s) \leftarrow \mbox{LastLayerActivations}(E(s))$
\State Initialize Replay Memory D to capacity N
\For{t = 1,2,\ldots,}
    \State $a_t$ = ACT$(s_t,Q,E^-,V^{-1},\epsilon,\ell,\beta)$
    \State Execute action $a_t$ and observe $\{r_t, e_t, s_{t+1}\}$
    \State Store transition $\{s_t,a_t,r_t,e_t,s_{t+1}\}$ in D
    \State Sample transitions 
    \Indent \Indent \State $\{s_j,a_j,r_j,e_j,s_{j+1}\}_{j=1}^m \in 
    D$
    \EndIndent \EndIndent
    \State $y_j = \text{Targets}\left(s_{j+1},r_j,\gamma,Q^-,E^-,V^{-1},\beta,\ell\right)$
    \State $\theta= \theta - \nabla_\theta \sum_j \left( y_j - Q(s_j,a_j;\theta)\right)^2$
    \State $\omega = \omega - \nabla_\omega \sum_j\left( e_j- \text{E}(s_j,a_j;\omega)\right)^2$
    \State If {$\left(t \text{ mod } C\right) = 0$} : $Q^-\gets Q$
    \State If {$\left(t \text{ mod } L\right) = 0$} : 
    \Indent \Indent
        \State $E^-,V^{-1} \gets $AENUpdate($E,\lambda,D$)
    \EndIndent \EndIndent
\EndFor
\end{algorithmic}
\end{minipage} 
\hspace{0.3cm}
\begin{minipage}{0.5\textwidth}
\begin{algorithmic}
\State
\Function{Act}{$s,Q,E,V^{-1},\epsilon,\beta,\ell$}
    \State $A'\leftarrow \{a: E(s)_a - \sqrt{\beta \phi(s)^TV_a^{-1}\phi(s)}  < \ell  \}$
    \State With probability $\epsilon,$ return $\text{Uniform}(A')$
    \State Otherwise, return $\underset{{a\in A'}}{\arg\max} Q(s,a)$
\EndFunction
\Function{Targets}{$s,r,\gamma,Q,E,V^{-1},\beta,\ell$}
    \IIf{$s$ is a terminal state} return $r$ \EndIIf
    \State $A'\leftarrow \{a: E(s)_a - \sqrt{\beta \phi(s)^TV_a^{-1}\phi(s)}  < \ell \}$
    \State return $(r+\gamma\underset{{a\in A'}}{\text{max}}Q(s,a))$
\EndFunction
\Function{AENUpdate}{$E^-,\lambda,D$}
\For{$a \in A$}
    \State $V^{-1}_a = \left(\sum_{j:a_j=a} \phi(s_j)\phi(s_j)^T + \lambda I\right)^{-1}$
    \State $b_a = \sum_{j:a_j=a} \phi(s_j)^Te_j $
    \State Set LastLayer($E^-_a$) $\gets V^{-1}_ab_a$
\EndFor
\State return $E^-, V^{-1}$
\EndFunction
\end{algorithmic}
\end{minipage} 
\label{alg1}
\end{algorithm}

 Our Algorithm presents a hybrid approach for DRL with Action Elimination (AE), by incorporating AE into the well-known DQN algorithm to yield our AE-DQN (Algorithm \ref{alg1} and Figure \ref{fig:diagram}). AE-DQN trains two networks: a DQN denoted by $Q$ and an AEN denoted by $E$. The algorithm uses $E$, and creates a linear contextual bandit model from it every $L$ iterations with procedure \textbf{AENUpdate()}. This procedure uses the activations of the last hidden layer of $E$ as features, $\phi(s) \leftarrow \mbox{LastLayerActivations}(E(s)),$ which are then used to create a contextual linear bandit model ($V_a=\lambda I+\sum_{j:a_j=a} \phi(s_j)\phi(s_j)^T,b_a=\sum_{j:a_j=a} \phi(s_j)^Te_j$). AENUpdate() proceeds by solving this model, and plugging the solution into the target AEN (LastLayer($E^-_a$) $\gets V^{-1}_ab_a$). The contextual linear bandit model ($E^-,V$) is then used to eliminate actions (with high probability) via the \textbf{ACT()} and \textbf{Targets()} functions. ACT() follows an $\epsilon-$greedy mechanism on the admissible actions set $A'=\{ a: E(s)_a-\sqrt{\beta\phi(s)^TV^{-1}_a\phi(s)}<\ell\}$. If it decides to exploit, then it selects the action with highest $Q$-value by taking an $\arg\max$ on $Q$-values among $A',$ and if it chooses to explore, then, it selects an action uniformly from $A'$. \textbf{Targets()} estimates the value function by taking $\max$ over $Q$-values only among admissible actions, hence, reducing function approximation errors.\\
\textbf{Architectures:} The agent uses an Experience Replay \citep{lin1992self} to store information about states, transitions, actions, and rewards. In addition, our agent also stores the elimination signal, provided by the environment (Figure \ref{fig:diagram}). The architecture for both the AEN and DQN is an NLP CNN, based on \citep{kim2014convolutional}. We represent the state as a sequence of words, composed of the game descriptor (Figure \ref{ZRK_ENV}, "Observation") and the player's inventory. These are truncated or zero-padded (for simplicity) to a length of 50 (descriptor) + 15 (inventory) words and each word is embedded into continuous vectors using \textit{word2vec} in $\mathbb{R}^{300}$. The features of the last four states are then concatenated together such that our final state representations $s$ are in $\mathbb{R}^{78,000}$. The AEN is trained to minimize the MSE loss, using the elimination signal as a label. We used $100$ ($500$ for DQN) convolutional filters, with three different 1D kernels of length (1,2,3) such that the last hidden layer size is $300.$ \footnote{Our code, the Zork domain, and the implementation of the elimination signal can be found at:\\ \url{https://github.com/TomZahavy/CB_AE_DQN
}}

\section{Experimental Results}

\textbf{Grid World Domain:}
We start with an evaluation of action elimination in a small grid world domain with 9 rooms, where we can carefully analyze the effect of action elimination. In this domain, the agent starts at the center of the grid and needs to navigate to its upper-left corner. On every step, the agent suffers a penalty of $(-1)$, with a terminal reward of $0$. Prior to the game, the states are randomly divided into $K$ categories. The environment has $4K$ navigation actions, 4 for each category, each with a probability to move in a random direction. If the chosen action belongs to the same category as the state, the action is performed correctly with probability $p_c^T=0.75$. Otherwise, it will be performed correctly with probability $p_c^F=0.5$. If the action does not fit the state category, the elimination signal equals $1$, and if the action and state belong to the same category, then it equals $0$. An optimal policy only uses the navigation actions from the same type as the state, as the other actions are clearly suboptimal.
We experimented with a vanilla Q-learning agent without action elimination and a tabular version of action elimination Q-learning. Our simulations show that action elimination dramatically improves the results in large action spaces. In addition, we observed that the gain from action elimination increases as the amount of categories grows, and as the grid size grows, since the elimination allows the agent to reach the goal earlier. We have also experimented with random elimination signal and other modifications in the domain. Due to space constraints, we refer the reader to the appendix for figures and visualization of the domain.

\begin{figure*}
\center \includegraphics[width=0.8\textwidth]{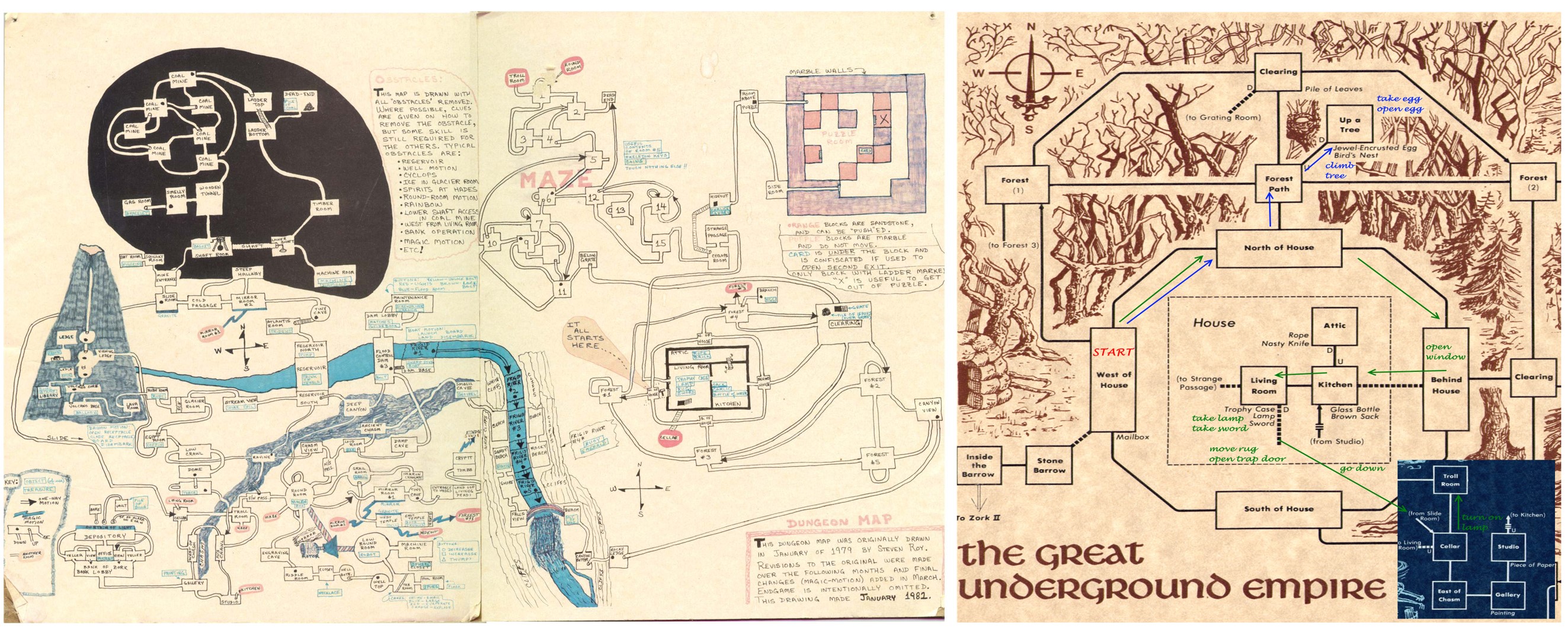}
     \caption{\textbf{Left: }the world of Zork. \textbf{Right:} subdomains of Zork; the Troll (green) and Egg (blue) Quests. Credit: S. Meretzky, The Strong National Museum of Play. Larger versions in Appendix B.}
      \label{ZRK_WORLD}
      \vspace{-0.5cm}
\end{figure*}

\textbf{Zork domain:} \textit{"This is an open field west of a white house, with a boarded front door. There is a small mailbox here. A rubber mat saying 'Welcome to Zork!' lies by the door"}. This is an excerpt from the opening scene provided to a player in ``Zork I: The Great Underground Empire''; one of the first interactive fiction computer games, created by members of the MIT Dynamic Modeling Group in the late 70s. 
By exploring the world via interactive text-based dialogue, the players progress in the game. The world of Zork presents a rich environment with a large state and action space (Figure ~\ref{ZRK_WORLD}). \\
Zork players describe their actions using natural language instructions. For example, in the opening excerpt, an action might be `open the mailbox' (Figure \ref{ZRK_ENV}). Once the player describes his/her action, it is processed by a sophisticated natural language parser. Based on the parser's results, the game presents the outcome of the action. The ultimate goal of Zork is to collect the Twenty Treasures of Zork and install them in the trophy case. Finding the treasures require solving a variety of puzzles such as navigation in complex mazes and intricate action sequences. During the game, the player is awarded points for performing deeds that bring him closer to the game's goal (e.g., solving puzzles). Placing all of the treasures into the trophy case generates a total score of $350$ points for the player. Points that are generated from the game's scoring system are given to the agent as a reward. Zork presents multiple challenges to the player, like building plans to achieve long-term goals; dealing with random events like troll attacks; remembering implicit clues as well as learning the interactions between objects in the game and specific actions. The \textbf{elimination signal} in Zork is given by the Zork environment in two forms, a "wrong parse" flag, and text feedback (e.g. "you cannot take that"). We group these two signals into a single binary signal which we then provide to our learning algorithm.\\
Before we started experimenting in the ``Open Zork`` domain, i.e., playing in Zork without any modifications to the domain, we evaluated the performance on two subdomains of Zork. These subdomains are inspired by the Zork plot and are referred to as the Egg Quest and the Troll Quest (Figure \ref{ZRK_WORLD}, right, and Appendix B). For these subdomains, we introduced an additional reward signal (in addition to the reward provided by the environment) to guide the agent towards solving specific tasks and make the results more visible. In addition, a reward of $-1$ is applied at every time step to encourage the agent to favor short paths. When solving ``Open Zork`` we only use the environment reward. The optimal time that it takes to solve each quest is $6$ in-game timesteps for the Egg quest, $11$ for the Troll quest and $350$ for “Open Zork”. The agent's goal in each subdomain is to maximize its cumulative reward. Each trajectory terminates upon completing the quest or after $T$ steps are taken. We set the discounted factor during training to $\gamma=0.8$ but use $\gamma=1$ during evaluation \footnote{We adopted a common evaluation scheme that is used in the ALE. During learning and training we use $\gamma < 1$ but evaluation is performed with $\gamma = 1$. Intuitively,  during learning, choosing $\gamma < 1$ helps to learn, while during evaluation, the sum of cumulative returns ($\gamma$ = 1) is more interpretable (the score in the game).}. We used $\beta=0.5,\ell=0.6$ in all the experiments. The results are averaged over 5 random seeds, shown alongside error bars (std/3).

\setcounter{subfigure}{0}
\begin{figure}[h]
\vspace{-0.6cm}
\subfigure[$A_1$,T=100]{
\includegraphics[width=0.3\linewidth]{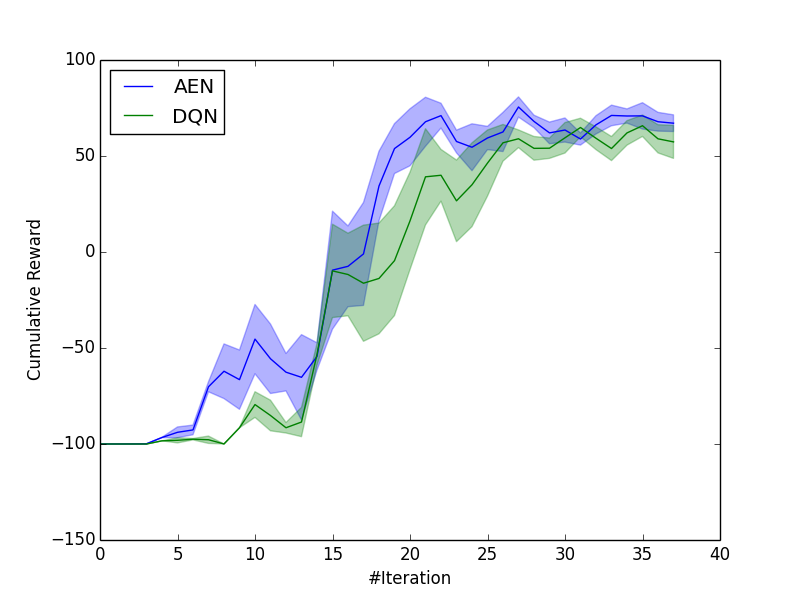}
\label{egg1}}\hspace{-0.1cm}
\subfigure[$A_2$,T=100]{
\includegraphics[width=0.3\linewidth]{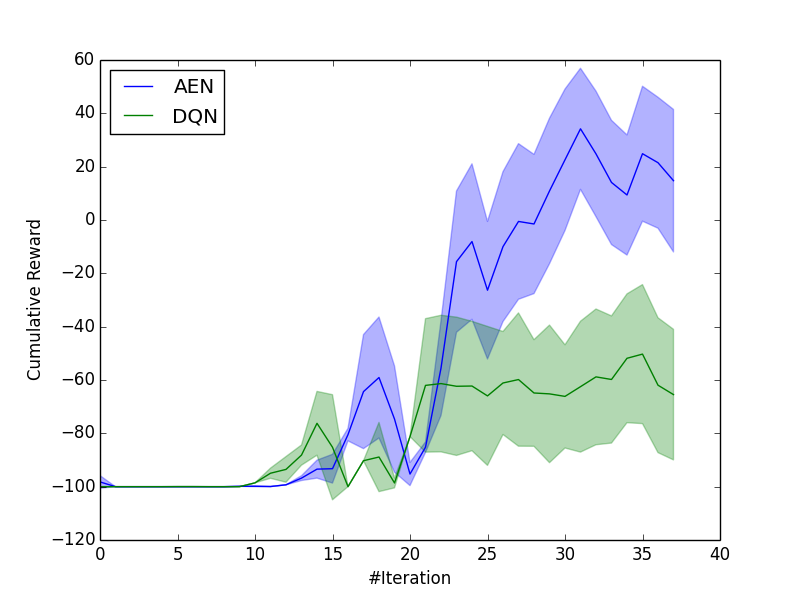}
\label{egg2}}\hspace{-0.1cm}
\subfigure[$A_2$,T=200]{
\includegraphics[width=0.3\linewidth]{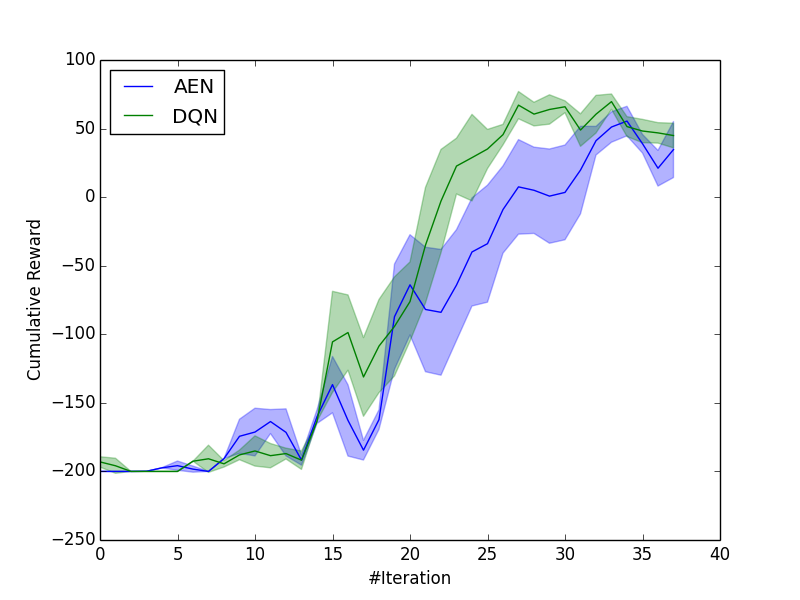}
\label{egg3}}
\vspace{-0.4cm}
\caption{Performance of agents in the egg quest.}
\vspace{-0.3cm}
\label{fig:egg}
\end{figure}

\textbf{The Egg Quest:}
In this quest, the agent's goal is to find and open the jewel-encrusted egg, hidden on a tree in the forest. The agent is awarded 100 points upon successful completion of this task. We experimented with the AE-DQN (blue) agent and a vanilla DQN agent (green) in this quest (Figure \ref{fig:egg}). The action set in this quest is composed of two subsets. A fixed subset of $9$ actions that allow it to complete the Egg  Quest like \textit{navigate} (south, east etc.) \textit{open} an item and \textit{fight}; And a second subset consists of $N_{\text{Take}}$ \textit{``take''} actions for possible objects in the game. The ``take'' actions correspond to taking a single object and include objects that need to be collected to complete quests, as well as other irrelevant objects from the game dictionary. We used two versions of this action set, $A_1$ with $N_{\text{Take}}=200$ and $A_2$ with $N_{\text{Take}}=300$. \textbf{Robustness to hyperparameter tuning:} We can see that for $A_1$, with T=100, (Figure \ref{fig:egg}$a$), and for $A_2$, with T=200, (Figure \ref{fig:egg}$c$) \textit{Both} agents solve the task well. However, for $A_2$, with T=100, (Figure \ref{fig:egg}$b$) the AE-DQN agent learns considerably faster, implying that action elimination is more robust to hyperparameters settings when there are many actions.

\begin{wrapfigure}{r}{0.4\textwidth}
\vspace{-0.7cm}
\centering
\includegraphics[width=1.05\linewidth]{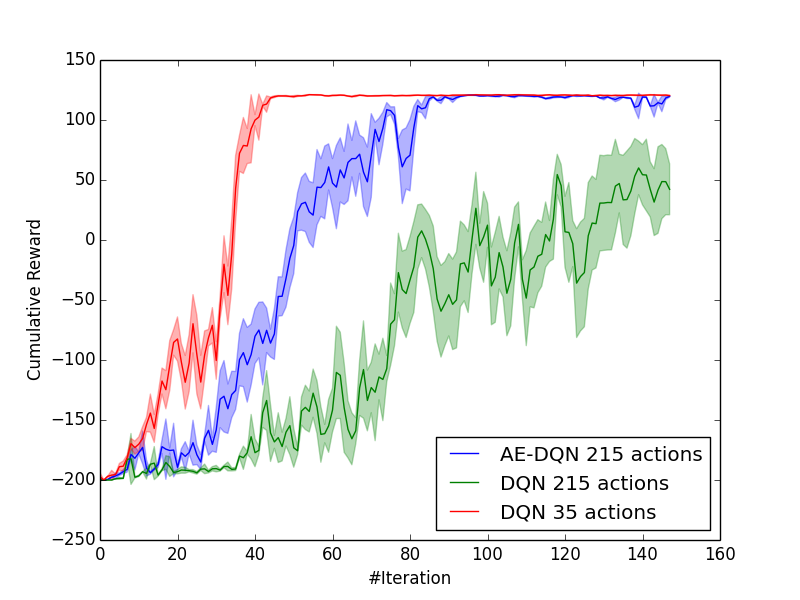}
\label{troll}
\vspace{-0.7cm}
\caption{Results in the Troll Quest.}
\vspace{-0.5cm}
\end{wrapfigure}

\textbf{The Troll Quest:}
In this quest, the agent must find a way to enter the house, grab a lantern and light it, expose the hidden entrance to the underworld and then find the troll, awarding him 100 points. The Troll Quest presents a larger problem than the Egg Quest, but smaller than the full Zork domain; it is large enough to gain a useful understanding of our agents' performance. The AE-DQN (blue) and DQN (green) agents use a similar action set to $A_1$ with $200$ take actions and $15$ necessary actions (215 in total). For comparison, We also included an "optimal elimination" baseline (red) that consists of only $35$ actions (15 essential, and 20 relevant take actions). We can see in Figure \ref{troll} that AE-DQN significantly outperforms DQN, achieving compatible performance to the "optimal elimination" baseline. In addition, we can see that the improvement of the AE-DQN over DQN is more significant in the Troll Quest than the Egg quest. This observation is consistent with our tabular experiments. 

\begin{wraptable}{r}{0.4\textwidth}
\vspace{-0.7cm}
\caption{Experimental results in Zork}
\begin{center}
\scriptsize
\begin{tabular}{| l | >{\centering\arraybackslash}m{0.06\textwidth} |  >{\centering\arraybackslash}m{0.08\textwidth} |}
\hline
\rule{0pt}{2.6ex} & $|A|$ & cumulative reward \\ \hline
\rule{0pt}{2.6ex} \citeauthor{kostka2017text} \citeyear{kostka2017text} & $\approx150$ & 13.5 \\ \hline
\rule{0pt}{2.6ex} Ours, $A_3$ & 131&   \textbf{39}\\ \hline \hline
\rule{0pt}{2.6ex} Ours, $A_3$, 2M steps& 131&   \textbf{44}\\ \hline \hline
\rule{0pt}{2.6ex} \citeauthor{fulda2017can} \citeyear{fulda2017can}& $\approx500$ & 8.8 \\ \hline
\rule{0pt}{2.6ex} Ours, $A_4$ &  1227& \textbf{16}\\ \hline
\rule{0pt}{2.6ex} Ours, $A_4$, 2M steps &  1227& \textbf{16}\\ \hline
\end{tabular}
\label{table:results}
\end{center}
\vspace{-0.5cm}
\end{wraptable}

\textbf{``Open Zork``:} Next, we evaluated our agent in the ``Open Zork`` domain (without hand-crafting reward and termination signals). To compare our results with previous work, we trained our agent for 1M steps: each trajectory terminates after $T=200$ steps, and a total of $5000$ trajectories were executed \footnote{The same amount of steps that were used in previous work on Zork \citep{fulda2017can,kostka2017text}. For completeness, we also report results for AE-DQN with 2M steps, where learning seemed to converge.}. 
We used two action sets: $A_3,$ the ``Minimal Zork`` action set, is the minimal set of actions ($131$) that is required to solve the game (comparable with the action set used by \citeauthor{kostka2017text} (\citeyear{kostka2017text})). The actions are taken from a tutorial for solving the game. $A_4,$ the ``Open Zork`` action set, includes $1227$ actions (comparable with \citeauthor{fulda2017can} (\citeyear{fulda2017can})). This set is created from action "templates", composed of \{Verb, Object\} tuples for all the verbs $(19)$ and objects $(62)$ in the game (e.g, open mailbox). In addition, we include a fixed set of $49$ actions of varying length (but not of length $2$) that are required to solve the game. Table \ref{table:results} presents the average (over seeds) maximal (in each seed) reward obtained by our AE-DQN agent in this domain while using action sets $A_3$ and $A_4$, showing that our agent achieves state-of-the-art results, outperforming all previous work. In the appendix, we show the learning curves for both AE-DQN and DQN agents. Again, we can see that AE-DQN outperforms DQN, learning faster and achieving more reward.

\section{Summary}
\label{sum}
In this work, we proposed the AE-DQN, a DRL approach for eliminating actions while performing Q-learning, for solving MDPs with large state and action spaces. We tested our approach on the text-based game Zork, showing that by eliminating actions the size of the action space is reduced, exploration is more effective, and learning is improved. We provided theoretical guarantees on the convergence of our approach using linear contextual bandits.  In future work, we plan to investigate more sophisticated architectures, as well as learning shared representations for elimination and control which may boost performance on both tasks. In addition, we aim to investigate other mechanisms for action elimination, e.g., eliminating actions that result from low Q-values \citep{even2003action}. Another direction is to generate elimination signals in real-world domains. This can be done by designing a rule-based system for actions that should be eliminated, and then, training an AEN to generalize these rules for states that were not included in these rules. Finally, elimination signals may be provided implicitly, e.g., by human demonstrations of actions that should not be taken. 

\clearpage
\medskip
\bibliography{iclr2018_workshop}
\bibliographystyle{aaai}

\clearpage
\begin{appendices}

\section{Proof of Proposition~\ref{prop:convergenceProp}} \label{convergencePropProof}

\convergenceProp*

\begin{proof}
We start by proving the convergence of the algorithm and then prove the bound on the number of visits of invalid actions.

Denote the MDP as $M$. According to Equation \ref{eq:contextual}, with probability of at least $1-\delta$, elimination by Equation \ref{eq:el_thr} never eliminates a valid action, and thus all of these actions are visited infinitely often. If all of the state-action pairs are visited infinitely often even after the elimination, the Q-learning will converge at all state-action pairs. Otherwise, there are some invalid actions, which are strictly suboptimal, and are visited a finite number of times. In this case, there exists some time $T<\infty$ such that all of these actions are never played for any $t>T$. Define a new MDP $\tilde{M}$, as $M$ without any of the eliminated actions. As these actions are strictly suboptimal, the value of $\tilde{M}$ will be identical to the value of $M$ in all states, and so are the Q-values for any action that survived the elimination. Furthermore, $\tilde{M}$ contains all of the valid states, and their Q-values will be identical those of $M$, as they only depend on the reward in the valid state-action pairs and the value in the next state, both which exist in $\tilde{M}$. For any $t>T$, $M$ is equivalent to $\tilde{M}$, and all of its state-actions are visited infinitely often. Therefore, the Q-function will converge to the optimal Q-function with probability 1 in all of $\tilde{M}$'s state-action pairs. Specifically, it will converge in all of valid state-action pairs $(s,a$), which concludes the first part of the proof.   


We'll now prove the sample complexity of any invalid actions. First, note that the confidence bound is strongly related to the number of visits in a state-action pair:

{\small
\begin{align} 
x(s_t)^T\bar{V}_{t-1,a}^{-1}x(s_t)
& = x(s_t)^T \left\{  \lambda I+T_{s,a}(t-1)x(s_t)x(s_t)^T+ 
\sum_{s'\neq s_t}T_{s',a}(t-1)x(s')x(s')^T\right\}^{-1}x(s_t) \nonumber \\
& \stackrel{(1)}{\le}x(s_t)^T \brc*{ \lambda I+T_{s,a}(t-1)x(s_t)x(s_t)^T }^{-1}x(s_t) \nonumber \\ 
& \stackrel{(2)}{=} \frac{\norm*{x(s_t)}^2}{\lambda}- \frac{T_{s,a}(t-1)\frac{\norm*{x(s_t)}^4}{\lambda^2}}{1+T_{s,a}(t-1)\frac{\norm*{x(s_t)}^2}{\lambda}}  
 \nonumber \\
&  = \frac{\norm*{x(s_t)}^2}{\lambda +T_{s,a}(t-1) \norm*{x(s_t)}^2}  \le \frac{1}{T_{s,a}(t-1)}
\label{eq:countupperbound}
\end{align}}

$(1)$ is correct due to the fact that for any positive definite $A$ and positive semidefinite $B$, the difference $A^{-1}-(A+B)^{-1}$ is positive semidefinite. $(2)$ is correct due to the Sherman–Morrison formula \citep{bartlett1951inverse}.  We note that this bound is not tight because it does not use the correlations between different contexts. In fact, the same bound can be probably achieved by placing a multi-armed bandit algorithm in each state. Deriving a tighter bound that utilizes the correlation between contexts is hard, as it is possible to observe a state that its context is not correlated with other states' contexts. Nevertheless, the confidence bounds for contextual bandits can be used in the non-tabular case, in contrast to a MAB formulation.

This implies that a satisfactory condition for correct elimination is

\begin{multline*}
x(s_t)^T\hat{\theta}_{t-1,a} - \sqrt{\beta_{t-1}(\tilde{\delta})x(s_t)^T\bar{V}_{t-1,a}^{-1}x(s_t)} \\ \stackrel{(1)}{\ge} u-2\sqrt{\beta_{t-1}(\tilde{\delta})x(s_t)^T\bar{V}_{t-1,a}^{-1}x(s_t)} 
\stackrel{(2)}{\ge} u-2\sqrt{\frac{\beta_{t-1}(\tilde{\delta})}{T_{s,a}(t-1)}}>\ell
\end{multline*}

where $(1)$ is correct due to Equation \ref{eq:el_thr} with $\mathbb{E}\brs*{e(s_t,a)} ={\theta ^*_a}^T x(s_t)\ge u$, with probability $1-\delta$, and $(2)$ is correct due to Equation \ref{eq:countupperbound}. Therefore, if $T_{s,a}(t)\ge 4\frac{\beta_t}{\br{u-\ell}^2} $ then action $a$ in state $s$ is correctly eliminated.  We emphasize that the bound does not depend on the algorithm that chooses state-actions, except for the dependency of $\beta_t$, through $\bar{V}_{t,a}$, in the history. Using the fact that $\beta_t$ is monotonically increasing with $t$, with probability $1-\delta$, all of the invalid actions are sampled no more than 
\begin{align*}
T_{s,a}(t) &\le \sum_{\tau=1}^t\mathds{1}\brc*{T_{s,a}(\tau)\le 4\frac{\beta_\tau}{\br{u-\ell}^2}} \\
& \le \sum_{\tau=1}^t\mathds{1}\brc*{T_{s,a}(\tau)\le4\frac{\beta_t}{\br{u-\ell}^2}}
\le 4\frac{\beta_t}{\br{u-\ell}^2}+1
\end{align*}

If the sub-gaussianity parameter is $R=0$, we have $\beta_t=\lambda S^2<\infty$, and therefore an arm will be sampled at most a finite number of times $T_0=4\frac{\lambda S^2}{\br{u-\ell}^2}+1<\infty$. 
Otherwise, if the state representations are bounded, i.e. $\forall s, \norm*{x(s)}_2\le L$, then, using the simpler form of $\beta_t$, the bound can be written as $\lim_{t\to\infty}\frac{T_{s,a}(t)}{\log \br*{\frac{t}{\delta}}}
\le \frac{4R^2d}{\br{u-\ell}^2} \enspace,$ which means an invalid action is sampled a logarithmic number of times.

\end{proof}

\section{Grid world simulations} \label{gridWorldDetails}

In this Section, we experimented with action elimination in a grid world domain with a tabular Q-learning algorithm. We start with the following default configuration (Figure \ref{fig:grid_world_map}). The grid size is 30x30, the number of state categories is $K=10,$ and the maximal episode length is $T=150$. If the chosen action is from the same category as the current state, it is performed correctly in probability $p_c^T=0.75$, and if the state and action types are different, the probability is $p_c^F=0.5$. We also study the effect of the domain's parameters on the performance of action elimination, by changing these parameters one at a time.

\begin{figure}[h]
\vspace{-0.5cm}
\centering
\includegraphics[width=0.6\linewidth]{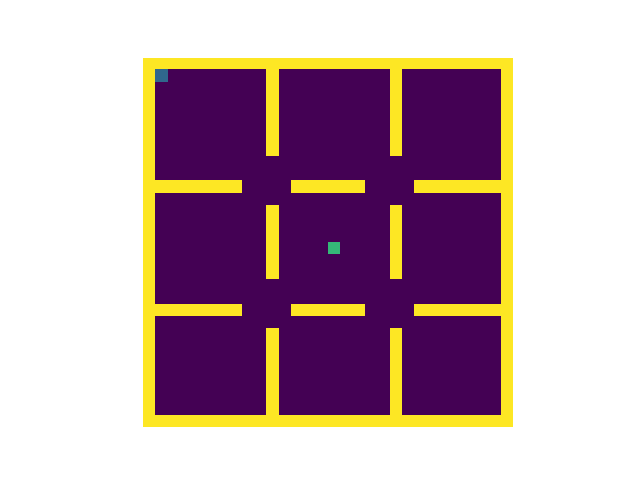}
    \vspace{-0.5cm}
     \caption{30x30 Grid World - the agent starts at the center (green) and needs to navigate to the upper-left corner (blue) while avoiding walls (yellow).}
      \label{fig:grid_world_map}
\end{figure}

On each of the simulations, the results were filtered by a moving average filter of length $200$, which is needed due to the stochastic nature of the domain. The results are averaged over 5 random seeds, shown alongside error bars (std/3).

Since the problem is tabular, we use confidence intervals in the spirit of UCT \citep{kocsis2006bandit} - denote the empirical mean of the elimination signal by $\bar{e}\br*{s,a}$ and the number of visits in a state-action pair by $N(s,a)$. An action will be eliminated if $\bar{e}\br*{s,a} - \sqrt{\frac{2\sum_a N(s,a)}{N(s,a)}} > \ell \triangleq 0.5$. The Q-function was initially set to $0$, the learning rates were chosen according to \citep{even2003learning}, and we set $\gamma=1$.  

We start with comparison between vanilla Q-learning without action elimination (green) and a tabular version of the action elimination Q-learning (blue) (Figure \ref{fig:grid}). We also include an "optimal elimination" baseline, i.e., a Q-learning agent with one category  (red), i.e., only 4 basic navigation actions, which forms an upper bound on performance with multiple categories. In Figure \ref{grid1},\ref{grid3}, the episode length is $T=150$, and in Figure \ref{grid2} it is $T=300$, to allow sufficient exploration for the vanilla Q-Learning. 

\setcounter{subfigure}{0}
\begin{figure}[h]
\subfigure[30x30,K=1,10]{
\includegraphics[width=0.3\linewidth]{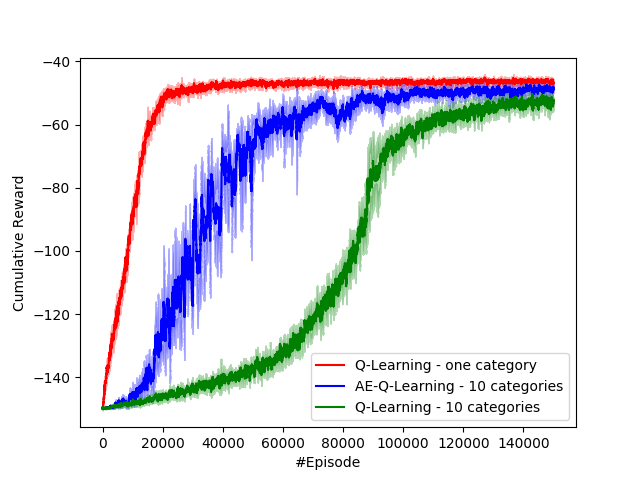}
\label{grid1}}\hspace{-0.1cm}
\subfigure[30x30,K=1,25]{
\includegraphics[width=0.3\linewidth]{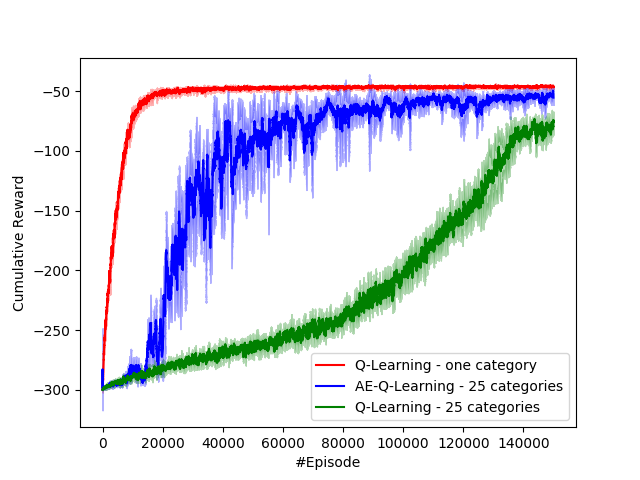}
\label{grid2}}\hspace{-0.1cm}
\subfigure[20x20,K=1,10]{
\includegraphics[width=0.3\linewidth]{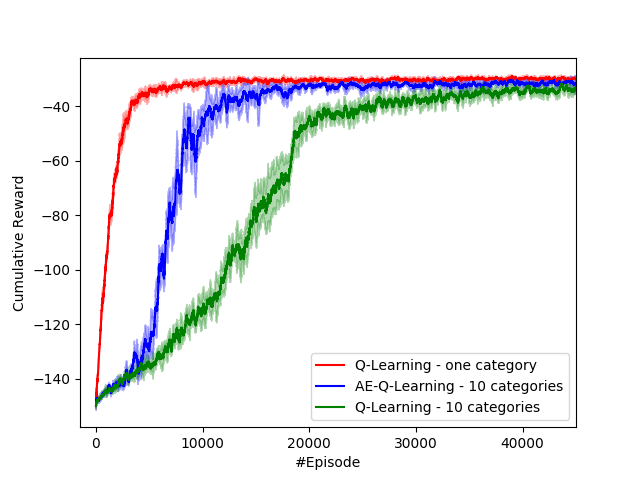}
\label{grid3}}\vspace{-0.3cm}
\caption{Performance of agents in grid world.}
\label{fig:grid}
\end{figure}

We can see that action elimination significantly improves the Q-learning algorithm (blue) over the baseline (green). In Figure \ref{grid2}, we increased the number of state categories. We can see that in this case, the action elimination dramatically improves in comparison to the vanilla algorithms, since there are more invalid actions (compare Figure \ref{grid1} with Figure \ref{grid2}). Figure \ref{grid4} present the simulation results with a 40x40 grid world. When compared to Figures \ref{grid1},\ref{grid3}, with grid of 30x30 and 20x20, respectively, we can conclude that action elimination becomes more effective as the grid size increases. Intuitively, it is relatively easy to reach the goal state on small grids, even if the action space is large, and therefore even random exploration will bring the agent to the goal quickly. From the moment the agent reaches the goal, its policy will be biased towards the goal's direction, and it becomes easier to distinguish between valid and invalid actions.

Next, we make a small modification in the domain and consider a random elimination signal, i.e., if the action does not fit the state category, an elimination signal will be equal $1$ in probability $p_e^F$. If the action and state belong to the same category, the probability is $p_e^T$. Specifically, we let $p_e^F$ change between $1$ to $0.6$ in invalid actions, and $p_e^T$ between $0$ to $0.4$ in valid actions. Inspecting Figure \ref{grid5}, we observe that only when the elimination signal is almost completely random, the action elimination algorithm does not present superior performance. 

Finally, Figure \ref{grid6} present a scenario where valid actions has almost no randomness ($p_c^T=0.9$), while invalid actions are almost completely random ($p_c^F=0.1$). Thus, it is easier to identify the invalid actions, and specifically, understand that these actions are suboptimal. The results indeed show that while action elimination Q-learning converges faster than the vanilla Q-learning, the difference between the convergence rates is smaller.

\setcounter{subfigure}{0}
\begin{figure}[h]
\subfigure[40x40,K=1,10, T=300]{
\includegraphics[width=0.32\linewidth]{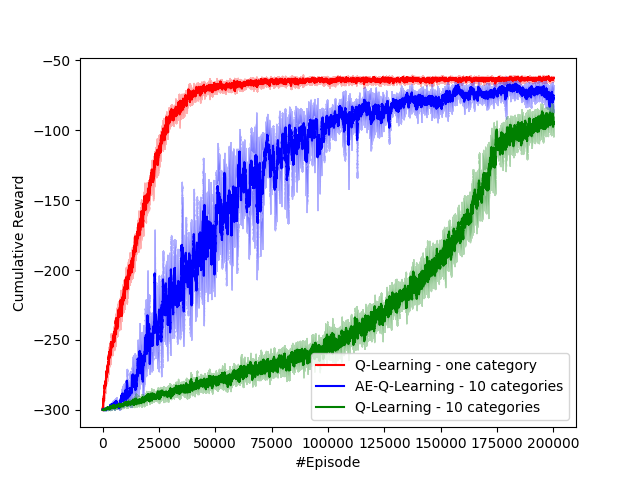}
\label{grid4}}\hspace{-0.1cm}
\subfigure[30x30,different elimination probability]{
\includegraphics[width=0.32\linewidth]{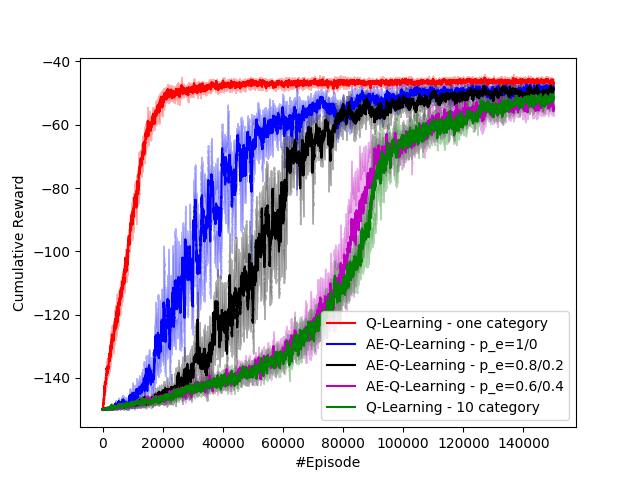}
\label{grid5}}\hspace{-0.1cm}
\subfigure[30x30, different random step probability]{
\includegraphics[width=0.32\linewidth]{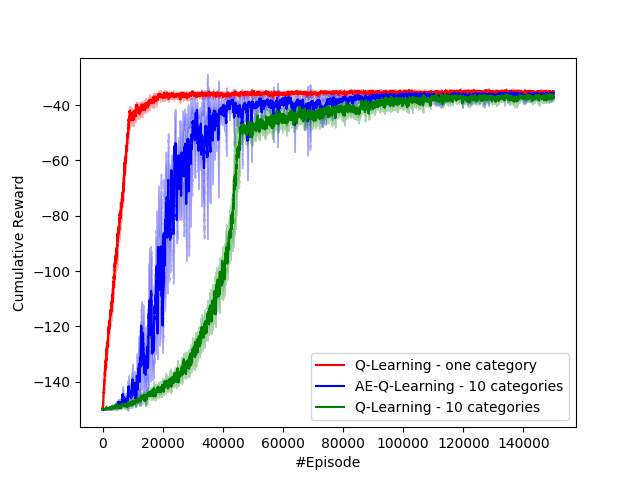}
\label{grid6}}\vspace{-0.3cm}
\caption{Performance of agents in grid world.}
\label{fig:grid_appendix}
\end{figure}

In summary, the tabular simulations showed a significant improvement due to action elimination, especially when the action space is large, the optimal and suboptimal actions are hard to distinct, and the horizon required to reach the goal is large.

\section{Additional figures}
\begin{figure}[h]
\subfigure[$A_3$]{
\includegraphics[width=0.44\linewidth]{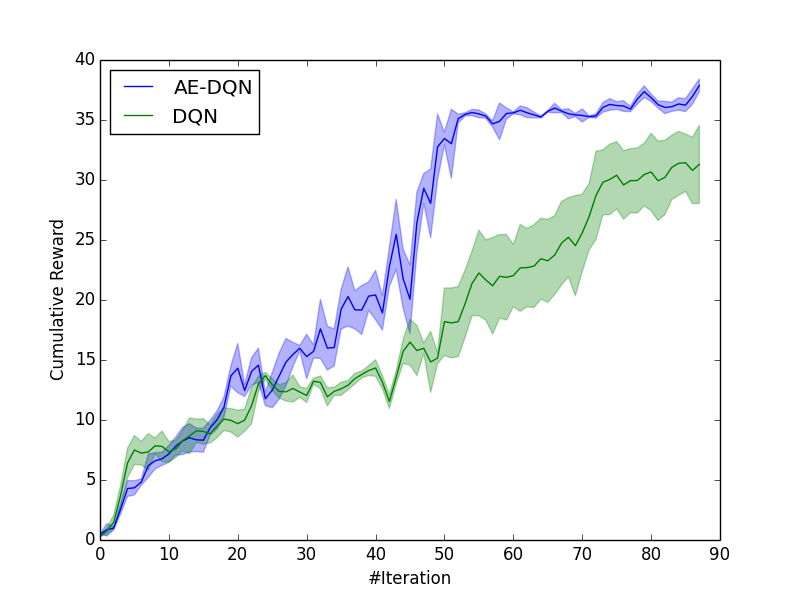}
}
\subfigure[$A_4$]{
\includegraphics[width=0.44\linewidth]{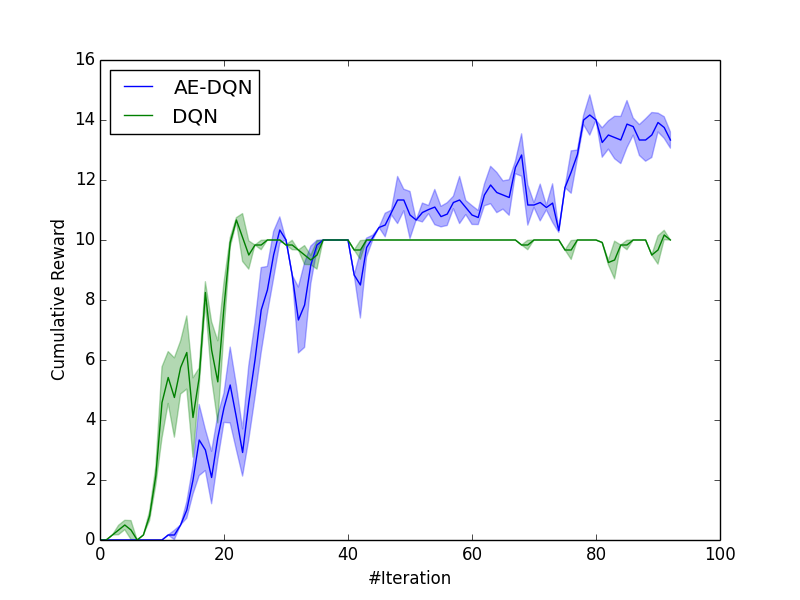}
}

\caption{Results in "Open Zork".}\label{fig:open_zork}
\end{figure}
\newpage

\section{Maps of Zork} \label{enlargedMaps}
\begin{figure}[h]
\centering
\includegraphics[width=0.75\linewidth]{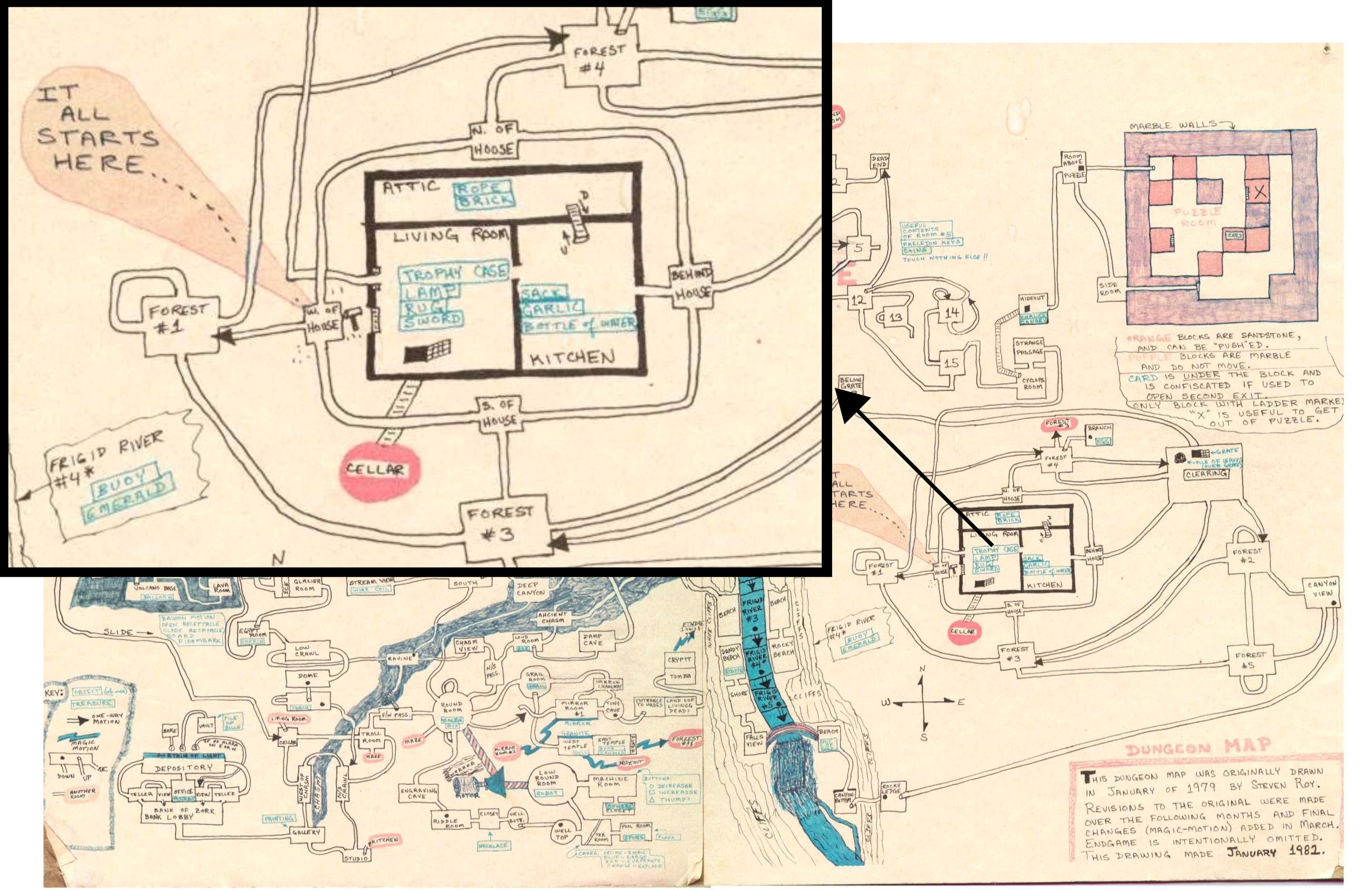}
     \caption{The world of Zork}
      \label{ZRK_WORLD_LARGE}
\end{figure}
\begin{figure}[h]
\centering
\includegraphics[width=0.6\linewidth]{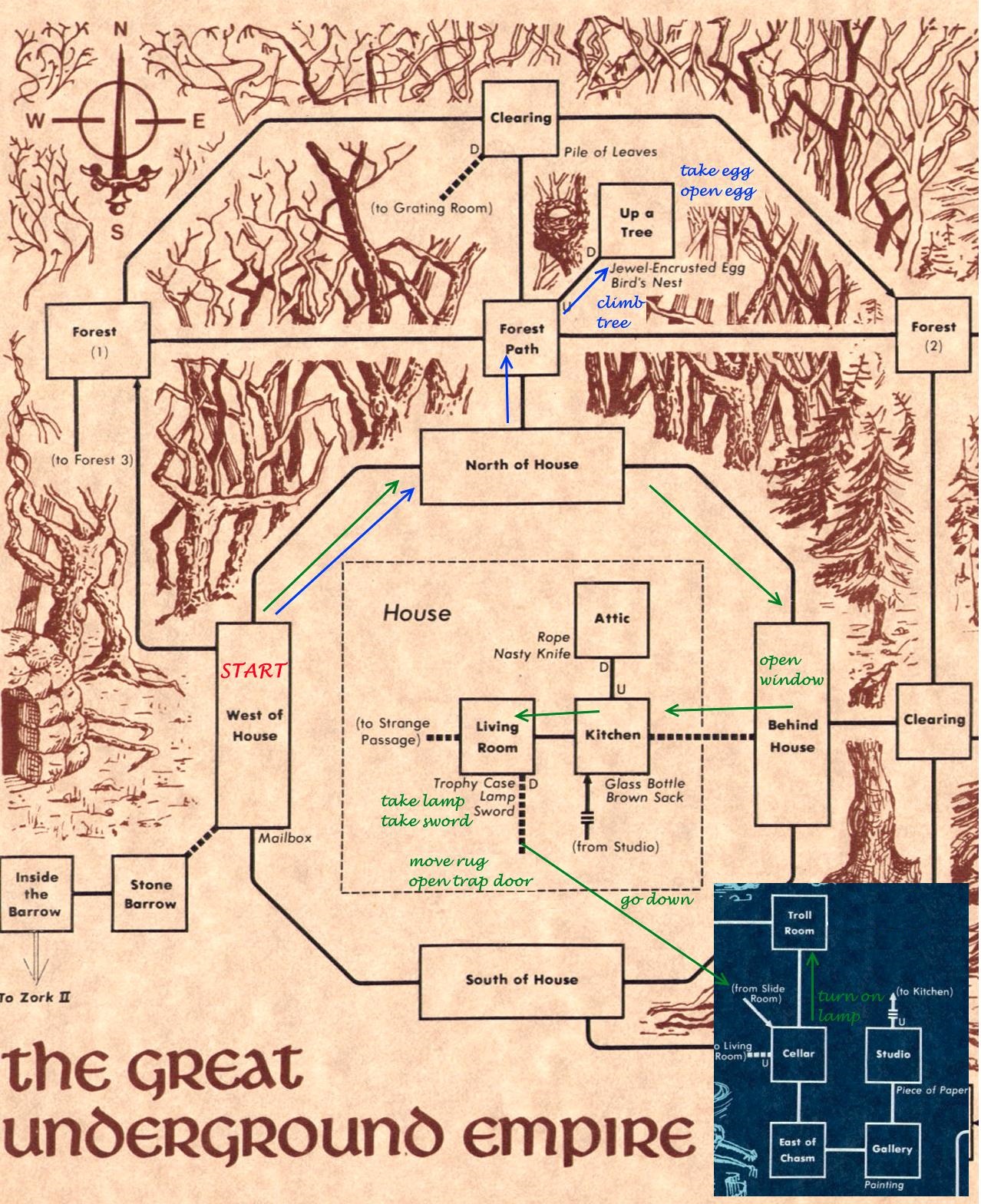}
     \caption{Subdomains of Zork; the Troll (green) and Egg (blue) Quests. Credit: S. Meretzky, The Strong National Museum of Play.}
      \label{ZRK_QUESTS_LARGE}
\end{figure}

\end{appendices}

\end{document}